  \providecommand\BibTeX{{%
    \normalfont B\kern-0.5em{\scshape i\kern-0.25em b}\kern-0.8em\TeX}}}
\newtheorem{theorem}{Theorem}
\newtheorem{assumption}{Assumption}
\newtheorem{definition}{Definition}
\newtheorem{lemma}{Lemma}
\newcommand\etc{etc\@ifnextchar.{}{.\@}}
\definecolor{ao(english)}{rgb}{0.0, 0.5, 0.0}
\newcommand{\gr}[1]{{\textcolor{ao(english)}{#1}}}
\newcommand{\bfgr}[1]{{\bf \textcolor{ao(english)}{#1}}}
\newcommand{\FancyDownArrow}{\begin{tikzpicture}[baseline=-0.3em]
\node[single arrow,draw,rotate=270,single arrow head extend=0.15em,inner
ysep=0.15em,transform shape,line width=0.03em,top color=green,bottom color=green!50!black] (X){};
\end{tikzpicture}}
\newcommand{\FancyUpArrow}{\begin{tikzpicture}[baseline=-0.3em]
\node[single arrow,draw,rotate=90,single arrow head extend=0.15em,inner
ysep=0.15em,transform shape,line width=0.03em,top color=green,bottom color=green!50!black] (X){};
\end{tikzpicture}}
\newcommand{\NA}{---}
\begin{document}

\title{Neural Insights for Digital Marketing Content Design}

\author{Fanjie Kong}
\authornote{Work done while at Amazon. \newline}
\affiliation{%
  \institution{Duke University, USA}
  \country{}
}
\email{fanjie.kong@duke.edu}

\author{Yuan Li}
\affiliation{%
  \institution{Amazon.com, Inc., USA}
  \country{}
}

\email{liayuan@amazon.com}

\author{Houssam Nassif}
\authornotemark[1]
\affiliation{%
  \institution{Meta, USA}
  \country{}
}
\email{houssamn@meta.com}

\author{Tanner Fiez}
\affiliation{%
  \institution{Amazon.com, Inc., USA}
  \country{}
}
\email{fieztann@amazon.com}

\author{Ricardo Henao}
\affiliation{%
  \institution{Duke University, USA\\
  KAUST, KSA}
  \country{}
}
\email{ricardo.henao@duke.edu	}
\author{Shreya Chakrabarti}
\affiliation{%
  \institution{Amazon.com, Inc., USA}
  \country{}
}
\email{chashrey@amazon.com}
\renewcommand{\shortauthors}{Fanjie Kong et al.}

\newcommand{\etal}{\textit{et al.}}
\newcommand{\ie}{\textit{i.e.}}
\begin{abstract}
In digital marketing, experimenting with new website content is one of the key levers to improve customer engagement. 
However, creating successful marketing content is a manual and time-consuming process that lacks clear guiding principles.
This paper seeks to close the loop between content creation and online experimentation by offering marketers AI-driven actionable insights based on historical data to improve their creative process. 
We present a neural-network-based system that scores and extracts insights from a marketing content design. Namely, a multimodal neural network predicts the attractiveness of marketing contents, and a {\em post-hoc} attribution method generates actionable insights for marketers to improve their content in specific marketing locations. 
Our insights not only point out the advantages and drawbacks of a given current content, but also provide design recommendations based on historical data.
We show that our scoring model and insights work well both quantitatively and qualitatively.
\vspace{2mm}
\end{abstract}

\begin{CCSXML}
<ccs2012>
   <concept>
       <concept_id>10003120.10003121.10003129</concept_id>
       <concept_desc>Human-centered computing~Interactive systems and tools</concept_desc>
       <concept_significance>500</concept_significance>
       </concept>
   <concept>
       <concept_id>10002951.10003317.10003338</concept_id>
       <concept_desc>Information systems~Retrieval models and ranking</concept_desc>
       <concept_significance>500</concept_significance>
       </concept>
   <concept>
       <concept_id>10002951.10003317.10003331</concept_id>
       <concept_desc>Information systems~Users and interactive retrieval</concept_desc>
       <concept_significance>500</concept_significance>
       </concept>
   <concept>
       <concept_id>10002951.10003317.10003325</concept_id>
       <concept_desc>Information systems~Information retrieval query processing</concept_desc>
       <concept_significance>500</concept_significance>
       </concept>
   <concept>
       <concept_id>10010147.10010178.10010179</concept_id>
       <concept_desc>Computing methodologies~Natural language processing</concept_desc>
       <concept_significance>300</concept_significance>
       </concept>
   <concept>
       <concept_id>10010147.10010178.10010224</concept_id>
       <concept_desc>Computing methodologies~Computer vision</concept_desc>
       <concept_significance>300</concept_significance>
       </concept>

 </ccs2012>
\end{CCSXML}
\ccsdesc[500]{Human-centered computing~Interactive systems and tools}
\ccsdesc[500]{Information systems~Retrieval models and ranking}
\ccsdesc[500]{Information systems~Users and interactive retrieval}
\ccsdesc[500]{Information systems~Information retrieval query processing}
\ccsdesc[300]{Computing methodologies~Natural language processing}
\ccsdesc[300]{Computing methodologies~Computer vision}

\keywords{Digital marketing, interactive system, deep learning, model interpretation, image and text recommendation. \newline }


\maketitle

\RestyleAlgo{ruled}
\SetKwComment{Comment}{/* }{ */}



\begin{figure} [!t]
\includegraphics[width=0.45\textwidth]{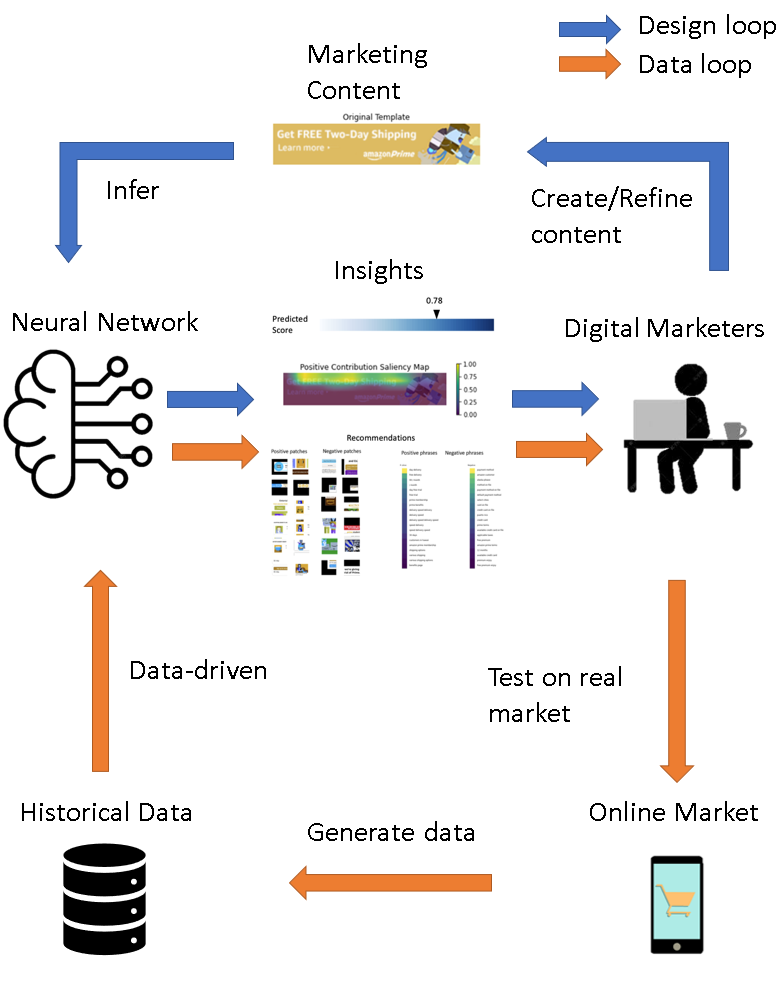}
\caption{Diagram of AI-driven marketing content design. }\label{fig:diagram}
\end{figure}

\section{Introduction}


Content experimentation plays an important role in driving key performance indicators as part of present-day online marketing~\cite{kohavi2017online,Fiez2022Anduril}.
In a typical industrial workflow, digital marketers manually design content, launch controlled online experiments, and receive feedback through collected impression logs.
While this process has proven to be reliable for measuring the incremental impact of content creation, it fails to provide insights to the marketer that can improve the likelihood of future experiments being successful. 
Indeed, unless treatments are deliberately designed relative to a control, it is difficult to establish the source of causality in an experiment outcome.
This limits the opportunity to learn the preferences of a customer base. 
Similarly, the outcomes of online experiments do not immediately provide information to a marketer on how they should design novel content for future experiments.

As a result of the existing content experimentation paradigm, creating new marketing elements is a manual and time-consuming process with significant human involvement.
Successful experiments are often the result of
subject matter expertise among marketing teams, manual detection of patterns across campaigns, and sequential testing of ideas~\cite{Nabi2022EB, zhaoqi2022InstanceOptimal}.
Consequently, it is common that resulting insights suffer from cognitive and incentive bias by marketing teams who analyze the results~\cite{insightBias}.

An opportunity exists to significantly improve the efficiency and effectiveness of marketing content design through data-driven actionable insights.
A fundamental challenge to this objective is that extracting actionable content creation insights from data-driven models requires methods that are interpretable by a human.
Consequently, existing work in this direction has relied on simple machine learning techniques to model digital marketing content.
In the closest related work on the topic~\cite{sinha2020designing}, a generalized linear model with handcrafted features was developed to score marketing content and provide insights.
Despite the promise of this approach, the technique suffers from several shortcomings in real-world marketing scenarios including: $i$) high prediction error, $ii$) a limited number of features, $iii$) the inability to generalize to content with novel features, and $iv$) unclear actionability from interpretation results.

In this paper, we develop a neural-network-based system that scores and extracts insights from a marketing content design to close the loop between content creation and online experimentation (see Figure~\ref{fig:diagram}). 
This approach is motivated by the remarkable success of deep neural nets in diverse application areas~\cite{kong2022efficient, grigorescu2020survey, singh2017machine, wang2022few, geng2020PQR, Geng2023SAmQ, cai2017real,grislain2019recurrent, nuruzzaman2018survey, wang2021personalized, biswas2019seeker}. 
However, providing insights to improve content design is challenging and different from the traditional tasks where deep learning has proven to be successful.
This is due to the fact that predictive performance is not the only objective, but it is also necessary to interpret the model, which is challenging given that deep learning models are generally difficult to interpret black-boxes. 

We overcome this issue by using {\em post-hoc} model agnostic attribution methods.
In summary, our paper makes the following contributions:
\begin{enumerate}
    \item To the best of our knowledge, we may be the first to apply deep learning in the digital marketing design process. We provide an analysis of how to leverage neural network interpretations to help in digital marketing design, and propose a novel image and text insight-generation framework based on attributions from deep neural nets.
    \item We present interpretable insights in an interactive visual format, with actionable insights overlaid with the content.
\end{enumerate}
We validate the performance of the scoring model on an Amazon industry dataset.
We also benchmark a variety of interpretation methods using a novel evaluation scheme.
To the best of our knowledge, this is the first work to apply deep learning as a tool to model digital marketing content and provide insights to improve content design. Lastly, we publicly release the pseudo-code of algorithms described in this paper for
researchers to easily reproduce the code and run our pipeline on their own datasets. Besides, to facilitate replications in other industrial settings, we do share images of our interactive dashboard in Figure~\ref{fig:dashboard}. 

\paragraph{\textbf{Organization.}}
In Section~\ref{sec: data}, we introduce the workflow that describes how digital marketers conduct experiments. In Section~\ref{sec: model}, we present the neural network model used to model the content data and the process used to train the neural network.
In Section~\ref{sec: insights}, we explain the method proposed to generate insights for content based on our multimodal neural network.
In Section~\ref{sec: insights_eval}, we propose a three-step approach to quantitatively evaluate the performance of our insights with respect to the correlation between applying insights-guided modification and the observed outcome.
Finally, in Section~\ref{sec: results}, we discuss our experiments and their results.

\section{Dataset and Metric}\label{sec: data}
Controlled experiments, also called randomized experiments or A/B tests, have had a profound influence in multiple fields, including
medicine, agriculture, manufacturing, and advertising \cite{kohavi2017online,Fiez2022Anduril}. 
Randomized and properly designed experiments can be used to establish causality, that is, to identify elements in marketing content likely to provide incremental impact~\cite{Sawant18HVAE}.
In this paper, our goal is to use neural networks to model digital marketing experiments, and learn causal effects from interpreting the behavior of the model.

A typical marketing dataset consists of multiple sequences of controlled experiments conducted by marketers in different digital marketing locations.
The dataset used in this paper contains tens of thousands of distinct content items and corresponding success rates.
Each marketing content includes various modalities, for instance, an image $I$ corresponding to the web-page screenshot of the content, a text $T$ that contains all textual campaigns in the content, a string $D$ that indicates the marketing content domain and location, and a set of categorical features $F$ that are extracted from the raw content with (potentially) handcrafted functions.


The target metric we adopt is the success rate.
In a binary setting, success can be defined as a click, a purchase, or other valuable customer action.
Using clicks as an example, success rate is the number of clicks over the number of times the content is shown:
\begin{equation}  
  Y = N_{\text{clicks}}/N_{\text{total}},
\end{equation}
where $N_{\text{total}}$ is the total number of people who viewed the content and $N_{\text{clicks}}$ is the number of people who clicked on it.
Our goal is to predict the success rate $Y$ using the multimodal input $X$, while providing insights by interpreting the model and its predictions.

\section{Marketing Content Neural Model}\label{sec: model}
We now introduce the details and components of our marketing content scoring model.
As a working example, we represent a marketing content using four of its modalities: image $I$, text $T$, content domain $D$, and feature vector $F$.
We encode each modality using a corresponding widely-used and efficient neural architecture (see Equation~\ref{eq: score_1}).
The image encoder is an RGB ResNet-18~\cite{he2016deep} model without the fully-connected classifier.
The text encoder is a standard BERT model~\cite{devlin2018bert} without the classification head.
Fully-connected MLP neural networks~\cite{rumelhart1985learning} serve as the encoders for both domain and categorical features.
The details of these networks are in Appendix~\ref{sec: architectures}.
We then use the most basic fusion strategy~\cite{gadzicki2020early} by concatenating the embeddings from all modalities via their encoders (see Equation~\ref{eq: score_2}).
Finally, we feed the concatenated embeddings into another fully-connected MLP neural network for regression. 

Formally, given input content $X=\{I, T, D, F\}$, the corresponding embedding is given by $X_{\text{emb}}=\{I_{\text{emb}}, T_{\text{emb}}, D_{\text{emb}}, F_{\text{emb}}\}$, where
\begin{equation} \label{eq: score_1}
 \begin{array}{l}
   I_{\text{emb}} = \text{ResNet}(I), \quad
   T_{\text{emb}} = \text{BERT}(T), \\
   D_{\text{emb}} = \text{MLP}_1(D), \quad
   F_{\text{emb}} = \text{MLP}_2(F).
\end{array}
\end{equation}
Then, denoting $C(\cdot)$ as the final module which takes all modalities as input, the success rate prediction $\widehat{y}$ is given as follows:
\begin{equation}\label{eq: score_2}
 \begin{array}{l}
   \widehat{y} = C(X_{\text{emb}}) = \text{MLP}_3(\{I_{\text{emb}}, T_{\text{emb}}, D_{\text{emb}}, F_{\text{emb}}\}).
\end{array}
\end{equation}
%
To facilitate model convergence, each sub-network in the multi-modal model is pretrained separately.
We begin by appending a classification head after each encoder to allow it to predict the success rate.
Then, we train each module using a view of the dataset that only contains the respective modality.
Importantly, the regression network $C(\cdot)$ is not trained since we do not have access to its input (concatenated embeddings of all modalities) at this (pretraining) stage.
After pretraining each sub-network, the whole multi-modal network is trained on the multi-modality dataset.
The encoders are initialized with the weights obtained in the pretraining stage.
We report the single-modality sub-networks and final model performance metrics in Section~\ref{sec: results}.

Since we want to predict the continuous, but bounded, success rate $Y$, we append a sigmoid function $\sigma(\cdot)$ after the output $\widehat{y}$ of the final regression function $C(\cdot)$.
Our optimization objective is the mean-squared error (MSE) between $Y$ and $\sigma(\widehat{y})$:
\begin{equation}\label{eq: objective}
 \begin{array}{l}
    L = \text{MSE}(Y, \sigma(\widehat{y})).
\end{array}
\end{equation} 

\section{Neural Insights}\label{sec: insights}
In this section, we describe how we utilize {\em post-hoc} interpretation methods to produce insights from our scoring model.
A key advantage of {\em post-hoc} interpretation is that it can be constructed from an arbitrary prediction model.
This property alleviates the need to rely on customized model architectures for interpretable predictions~\cite{fukui2019attention, wang2019sharpen} or to train separate modules to explicitly produce model explanations~\cite{chang2018explaining, goyal2019counterfactual}.
This section begins by motivating the utility of insights {\em post-hoc} attribution, then describes the attribution methods, and concludes by explaining how we develop insights from attribution techniques. Note that we are formulating a new problem in deep learning, where our insights aim to help marketers improve existing content.

\subsection{Insights: guidance to improve current design}
We start by addressing the attribution problem~\cite{sundararajan2017axiomatic, baehrens2010explain}, defined as the assignment of contributions to individual input features~\cite{efron2020prediction}.
The aim of this subsection is illustrative; we seek to show in a near-ideal scenario that {\em post-hoc} attributions from a neural network can help improve the success rate of content that is being developed, whereas Section~\ref{sec:evaluation} verifies it empirically.
Toward this goal, let us define the input content as a bag a features with a success rate.
\begin{definition}
The input content $X$ is a bag of features $X = \{x_i \in \mathbb{R}^n | i=1,2, ..., N\}$ with common success rate label $Y \in [0, 1]$.
\end{definition}
We now assume that the underlying success rate $Y$ corresponding to content $X$ can be represented as a linear combination of attribution scores for each feature in the representation of $X$.
\begin{assumption}
Given a tuple $\{X,Y\}$, let $\{y_i\in \mathbb{R} | i=1,2,...,N\}$ be the contribution of features of $X$ to the ground-truth success rate $Y$, such that $\sum y_i = Y$.
Each individual attribution $y_i$ corresponds to an individual input feature $x_i$.
We only have access to the bag label $Y$, while the ground-truth feature-level attribution $y_i$ is unknown.
\label{assump:linear}
\end{assumption}
We define an {\em attributor} as a function that estimates the contribution $y_i$ of a feature $x_i\in X$ to the success rate prediction for the entire bag $X$. For example, a digital marketer has a set of promotional slogans $\{x_1, ..., x_r\}$, the contribution of each slogan to the success rate is $\{y_1, ..., y_r\}$. After adding these slogans to a blank content, the success rate of the blank content increased by an increment of $Y = \sum_{i=1,...,r} y_i$. Our attributor predicts the contribution of each slogan in the content such that $c(x_i) = y_i \forall i=1,..., r$.
\begin{definition}
Given a prediction function $C(\cdot)$ such that $C(X)$ predicts $Y$, define an attributor $c(\cdot)$ as a function that estimates the contributions of each input feature $x_i\in X$ to the prediction $C(\cdot)$, which can be expressed as $C(X) =  \sum_{x \in X}c(x)$. 
\label{def:attributor}
\end{definition}
We now use this framework to show that using a feature with a higher attribution score than an existing feature would increase the overall success rate in a near-ideal scenario.
This underscores that attribution methods can act as a guide for digital marketers to refine their existing content given that this effect can also be validated empirically (see Section~\ref{sec:evaluation}).
%

Consider replacing a feature $x$ in bag $X$ with another feature $\bar{x}'$ such that $c(\bar{x}') \geq c(\bar{x})$, which is consistent with an A/B testing in which a treatment is derived from a control~\cite{bilovs2016open,Fiez2022Anduril}. 
Let $X'$ be the treated content $X' = (X \backslash \{\bar{x}\} )\cup \{\bar{x}'\}$, where $\bar{x}$ is replaced by $\bar{x}'$. 
We now show that the treated content $X'$ will have a higher success rate under certain assumptions. 
\begin{proposition}
Replacing a feature $\bar{x}$ in bag $X$ with a feature $\bar{x}'$ such that $c(\bar{x}')\geq c(\bar{x})$ will increase the overall success rate from $Y$ to $Y'$ when $C(X')\geq C(X)\Leftrightarrow Y' \geq Y$, and under Assumption~\ref{assump:linear}.  
\end{proposition}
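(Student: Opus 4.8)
The plan is to reduce the statement to a short telescoping computation on the attributor decomposition and then transport the resulting inequality from the model prediction $C(\cdot)$ to the ground-truth success rate. First I would invoke Definition~\ref{def:attributor} to write both scores additively over their bags, namely $C(X) = \sum_{x \in X} c(x)$ and $C(X') = \sum_{x \in X'} c(x)$. Since $X' = (X \backslash \{\bar{x}\}) \cup \{\bar{x}'\}$ differs from $X$ in exactly one element, the contributions of every common feature cancel, leaving
\begin{equation}
C(X') - C(X) = c(\bar{x}') - c(\bar{x}).
\end{equation}
By the hypothesis $c(\bar{x}') \geq c(\bar{x})$ the right-hand side is nonnegative, so $C(X') \geq C(X)$.

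Next I would bridge from model scores to true success rates. The biconditional $C(X') \geq C(X) \Leftrightarrow Y' \geq Y$ is supplied as a hypothesis, and Assumption~\ref{assump:linear} is what makes it coherent: it posits that the label decomposes additively as $Y = \sum_i y_i$ with each $y_i$ attached to a single feature, mirroring the additive form of $C$ in Definition~\ref{def:attributor}. In the near-ideal regime where the attributor recovers these ground-truth contributions, i.e. $c(x_i) = y_i$, one obtains $C(X) = \sum_{x \in X} c(x) = \sum_i y_i = Y$ and, by the same token, $C(X') = Y'$, so the biconditional holds with equality on each side. Applying it to the inequality $C(X') \geq C(X)$ derived above then immediately yields $Y' \geq Y$, which is the claim.

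The main obstacle is not the arithmetic but justifying that the common features' attributions are genuinely unchanged when $\bar{x}$ is swapped for $\bar{x}'$. Generic attribution schemes are context-dependent, so $c(x)$ could in principle shift once the rest of the bag changes, breaking the cancellation that the telescoping step relies on. The resolution is precisely the additive, feature-local structure enforced by Assumption~\ref{assump:linear} together with Definition~\ref{def:attributor}: writing $c(\cdot)$ as a function of a single feature encodes context-independence, which is exactly what makes the decomposition exact and the single-term difference valid. I would therefore state explicitly that the result holds within this idealized additive regime, and note that the empirical study in Section~\ref{sec:evaluation} is what supports the biconditional when these idealizations hold only approximately.
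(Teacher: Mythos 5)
Your proposal is correct and follows essentially the same route as the paper's proof: apply Definition~\ref{def:attributor} to get the additive decomposition, observe that $C(X') - C(X) = c(\bar{x}') - c(\bar{x}) \geq 0$, and then invoke the hypothesis $C(X')\geq C(X)\Rightarrow Y'\geq Y$ to conclude. Your extra remarks on context-independence of $c(\cdot)$ and the idealized regime are a reasonable elaboration but do not change the argument.
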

\begin{proof}
By Definition~\ref{def:attributor}, $C(X) = \sum_{x \in X} c(x)$.
Thus, since $c(\bar{x}')\geq c(\bar{x})$ by construction, we have that 
\begin{equation}
C(X') =  \sum_{x \in X } c(x) + (c(\bar{x}') - c(\bar{x})) \geq C(X).
\end{equation}
Since $C(X')\geq C(X)\Rightarrow Y' \geq Y$, we conclude that $Y' \geq Y$.
\end{proof}
The above example indicates that replacing features with higher $c(x)$ would increase $Y$ when $C(X)$ is positively correlated with $Y$. In real-world datasets, the condition $C(X') \geq C(X) \Rightarrow Y' \geq Y$ may not always hold. However, we use pairwise accuracy to evaluate the accuracy of our predictor when comparing two content elements in Section~\ref{sec:evaluation} and validate the efficacy of the replacement.
Below, we detail both the prediction function $C(\cdot)$ and attributor $c(\cdot)$.

\subsection{Post-hoc attribution methods}
There are three common trends in mechanisms behind {\em post-hoc} attribution.
Back-propagation-based methods compute attributions according to the gradients with respect to the input~\cite{sundararajan2017axiomatic}. 
Activation-based methods use a variety of ways to weigh activation maps of intermediate layers in neural network to assign attributions~\cite{selvaraju2017grad}.
Perturbation-based methods treat the network as a black-box and assign importance by observing the change in the output after perturbing the inputs.
For instance, feature ablation~\cite{merrick2019randomized} is done by replacing each input feature with a given baseline (zero vector), and computing the difference in the output.
Another alternative is by approximating Shapley values in deep neural networks~\cite{lundberg2017unified, ancona2019explaining}.
Kernel SHAP leverages a kernel-weighted linear regression to estimate the Shapley values of each input as the attribution scores~\cite{lundberg2017unified}.
\citet{langlois2021passive} use PCA to aggregate a variety of attribution methods to estimate the {\em shared} component of the variance between different types of attention maps.

In our implementation, we borrow directly from the mentioned {\em post-hoc} attribution methods, namely, GradCam, Integrated Gradient, Kernel SHAP, Feature Ablation, and PCA, to approximate the attributor $c(\cdot)$.
If the prediction function is a multimodal neural network $C(X)$ as defined in Section~\ref{sec: model}, the attributor is given by $c(x_i) = \text{attribution}(C(X))[i]$.
Note that attributions are rescaled to satisfy $C(X) = \sum_{i=1}^n c(x_i)$, as in Definition~\ref{def:attributor}.


\begin{figure*}[!t]
\includegraphics[width=.9\textwidth]{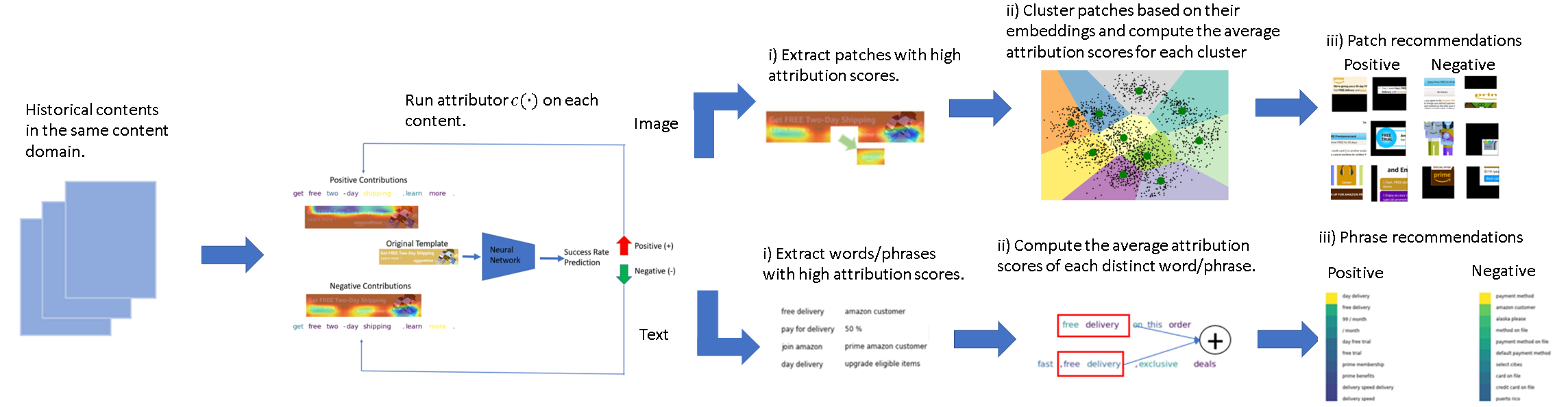}
\caption{Generating image and text recommendations. For all content instances in the same marketing location $D$, we first run attribution methods on ResNet-18 and BERT separately to get the attribution maps for text and images. For text data, the scores are simply ranked by the average of the overlaid attribution values on the same words or phrases. For image data, we crop the salient area in every image, and then cluster them based on their embeddings in ResNet-18. Their scores are ranked by the average attribution scores in the same cluster. See Section~\ref{sec: insights} for details.
}
\label{fig:insights_recommendations}
\end{figure*}

\subsection{Insights: recommending design elements}
%
Given this attribution framework, our system leverages historical data to provide recommendations of visual and textual design element alterations (see Figure~\ref{fig:insights_recommendations}).
The goal of our recommendation is to identify features that are highly likely to improve the success rate of a content being iterated on, and to provide hints for marketers as they embark on designing brand new creative content.
We recommend features ranked by their mean attribution score across the whole dataset.
We compute the rank score $r$ of a feature $x$ as: 
\begin{equation}
  r(x) = \frac{1}{N} \sum_{X \in \mathcal{X}} c(x) ,
\end{equation}
where the rank score $r(x)$ is an estimate of the expected attribution score over the data distribution $\mathcal{X}$.

We split the implementation of our recommendation strategy according to its modality, whether text or image.
We explain our recommendation strategy below and illustrate it in Figure~\ref{fig:insights_recommendations}.

\begin{minipage}[l]{0.95\columnwidth}
\centering
        \includegraphics[width=1.0\textwidth]{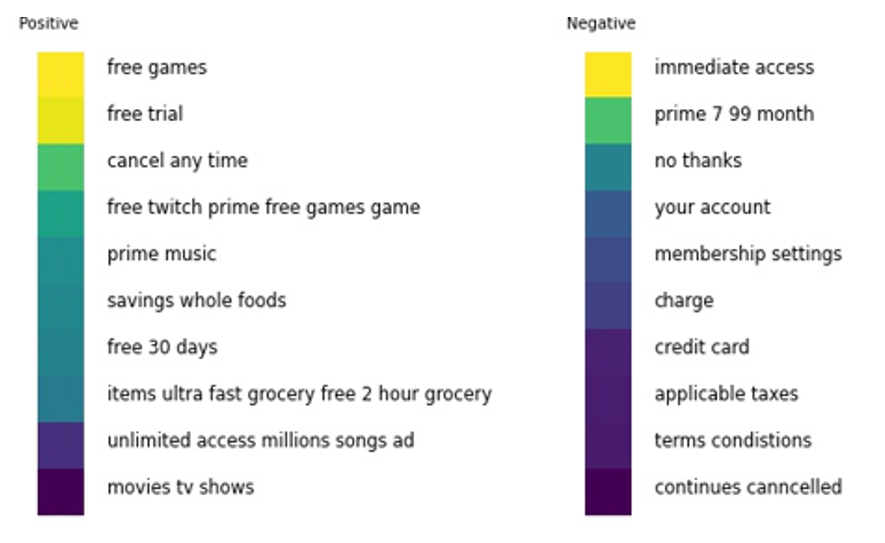}
        \captionof{figure}{Example of marketing recommended phrases.}
        \label{fig: phrase_recommendations}
 \end{minipage}
 
\paragraph{\textbf{Text Recommendation}}
For text data, we include word-level and phrase-level recommendations.
In word-level recommendations, we simply recommend words that have high average attribution scores across all text contents in a marketing location.
In phrase-level recommendations, $i$) we use phrasemachine~\cite{handler2016bag} to extract phrases from each single text content; $ii$) we then compute the attribution score of a phrase by averaging the attribution scores of all its words; $iii$) finally, we recommend phrases that have high average attribution scores across all text contents within a domain.
We define {\em positive} phrases as phrases with the top-10 rank scores while {\em negative} phrases are phrases with the bottom-10 rank scores.

Figure~\ref{fig: phrase_recommendations} shows an illustrative example of top and bottom scoring phrases.
In the positive phrases, our model recommends using slogans about benefits such as ``free game'', ``free trial'', ``free twitch'', ``unlimited access millions songs'', {\em etc}.
The negative phrases are about pricing, payment and legal terms, such as ``prime 7. 99  month'', ``credit card'', ``applicable taxes'', {\em etc}.
\paragraph{\textbf{Image Recommendation}}
For image data, we overlay historical ground truth attributions on top of the image in consideration, recommending actions on patches (subsets) of the image.
While recent works show the success of deep neural networks in image recommendations~\cite{he2016vista,sulthana2020improvising, niu2018neural,biswas2019seeker}, our image recommendation zooms into the salient patches inside images, aiming to provide users with key visual elements that contribute most to the label of the image.
The goal of our image recommendation algorithm is very different from that of existing works.
Moreover, our methodology to use attributions to find salient patches and cluster them to detect common patterns is innovative.

\begin{figure*}
\includegraphics[width=1.0\textwidth]{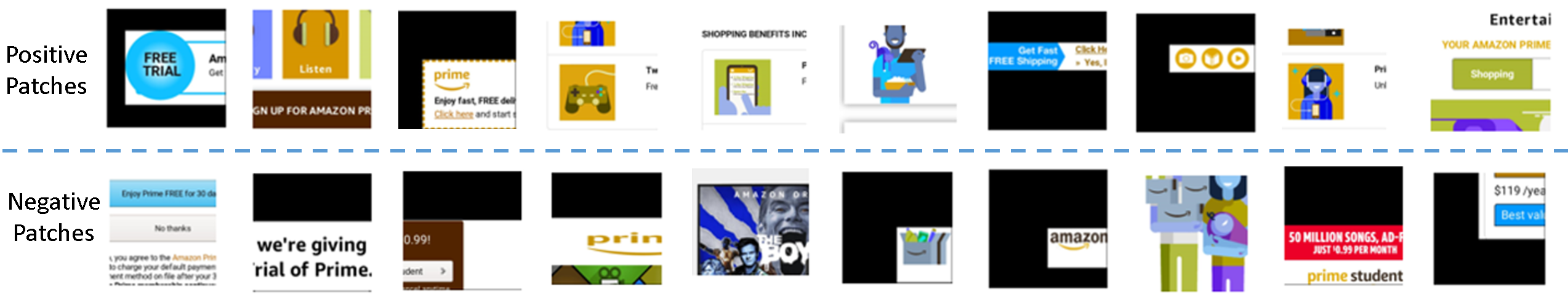}
\caption{An example of visual insights. We show 10 positive patches within the top-10 rank scores and 10 negative patches within bottom-10 rank scores. }
\label{fig: patches_recommendations}
\end{figure*}

Our image recommendation consists of the following steps.
$i$) We first run the attribution method on each single content in the whole dataset.
Then, we extract patches with top-$K$ attribution scores in an image.
Once a patch is selected, we execute non-maximum suppression on the region of the selected patch to ensure each patch is distinct.
$ii$) Subsequently, we cluster these patches based on their ResNet-18 embeddings using K-Means clustering~\cite{ball1965isodata} to uncover the design patterns of these patches.
$iii$) Finally, patch recommendations are collected from each cluster.

In our work, we leverage K-means clustering to help us group similar image patches, as it has been successfully used for unsupervised image classification~\cite{ranjan2017hyperspectral, anas2017skin}.
We use the elbow method to select the number $K$ of centroids~\cite{thorndike1953belongs}.
In order to encourage a diverse set of suggestions, we randomly sample an equal number of patches from each cluster, as different clusters reflect distinct visual information.
This procedure ensures the image recommendations have enough variety of patterns and avoids recommending repetitive patches. 
The positive patches are randomly sampled from clusters within the top-10 rank scores while the negative patches are randomly sampled from clusters within the 10 lowest rank scores.

Figure~\ref{fig: patches_recommendations} shows an illustrative example of our visual design recommendation.
The recommendations of images have some insights similar to text insights in Figure~\ref{fig: phrase_recommendations}.
Some positive patches are illustrations about benefits and some negative patches are illustrations about payment (row 2, column 10) and offers without revealing discounts and upgrades (row 2, column 2).
This example seems to suggest that using the icon of prime (row 1, column 3) is more attractive than the generic Amazon icon (row 2, column 7).
Moreover, negative patches shows a distorted Prime logo (row 2, column 4), an exaggerated human face (row 2, column 5) and an infantile cartoon (row 2, column 8), characters that resonate less with many customers~\cite{Resonantads}, while positive patches recommend entertainment icons (row 1 in columns 2, 4, 8) and more favorable human illustrations such as upbeat smiling persons (row 1, columns 6, 9).

\section{Insights Evaluation}\label{sec: insights_eval}
We now tackle the open-ended problem of evaluating insights. 
We need a practical insight-evaluation metric that marketers can track and trust, that captures the relationship between acting on an insight and its ensuing causal effect, and conveys the expected success rate increase if that insight is applied.
Existing evaluation metrics of interpretation methods span faithfulness, stability and fairness~\cite{agarwal2022openxai}, which do not satisfy our needs. 
\citet{runge2019detecting} quantify the strength of causal relationships from observational time series data with pairwise correlations. 
In our work, we aim to examine the relationship between insights-guided modifications and the ensuing change in the actual success rate.
However, evaluating our insights is an inherently difficult problem since no explicit ground truth feature-level attributions $y$ exist. 

\begin{algorithm}
\caption{A generic three-step approach to evaluate insights of attributor $c(\cdot)$.}\label{alg:eval_general}
\KwData{
Input pairs of control bags and treatment bags $(X, X'), \, \forall X, X' \in \mathcal{X}$ and $(Y, Y'),\,  \forall Y, Y' \in [0, 1]$ are the pairs of control labels and treatment labels respectively, 
and the evaluated attributor $c(\cdot): \mathbb{R}^n \rightarrow [0, 1]\subset \mathbb{R}$.
}
\KwResult{Correlation coefficient $\rho$.
} 
      \par  \textbf{Step $i$).} Compute the distinct elements set $S$, such that the attributes in $S$ can be only found in $X$ or $X'$. \\
   \qquad $S:=\{x|(x \in X \wedge x\notin X')\cup (x \notin X \wedge x\in X') \}$; 
      \par \textbf{Step $ii$).} Compute predicted attribution difference $d_C$ and actual success rate improvement  $d_Y$: \\ 
        \qquad $\mathbf{d_C} := \text{sign}(Y'-Y)(\sum\limits_{x \in (X' \cap S)}c(x) - \sum\limits_{x \in (X \cap S)}c(x))$; \\
       \qquad $\mathbf{d_Y} := |\Delta Y|$; \\
\textbf{Step $iii$).} Examine the linear relationship of variable $\mathbf{d_C}$ and variable $\mathbf{d_Y}$ by computing the Pearson Correlation $\rho$ on the whole dataset. \\
\textbf{Output $\rho$}.
\end{algorithm}

In Section~\ref{sec: insights}, we show that one can leverage insights to improve content attractiveness with an optimal prediction function $C(\cdot)$.
However, in real-world situations, we may not be able to obtain a model $C(\cdot)$ satisfying the idealized properties.
Further, a content change could span multiple features of the original design. 
Similar to~\cite{zhang2021absolute}, which computes the correlation of absolute neighbour differences to detect heteroscedastic relationships, we use the Pearson correlation between the predicted attribution difference and the actual success rate improvement to quantify the relative performance of an an insight.

\begin{figure}
    \centering
    \includegraphics[width=0.45\textwidth]{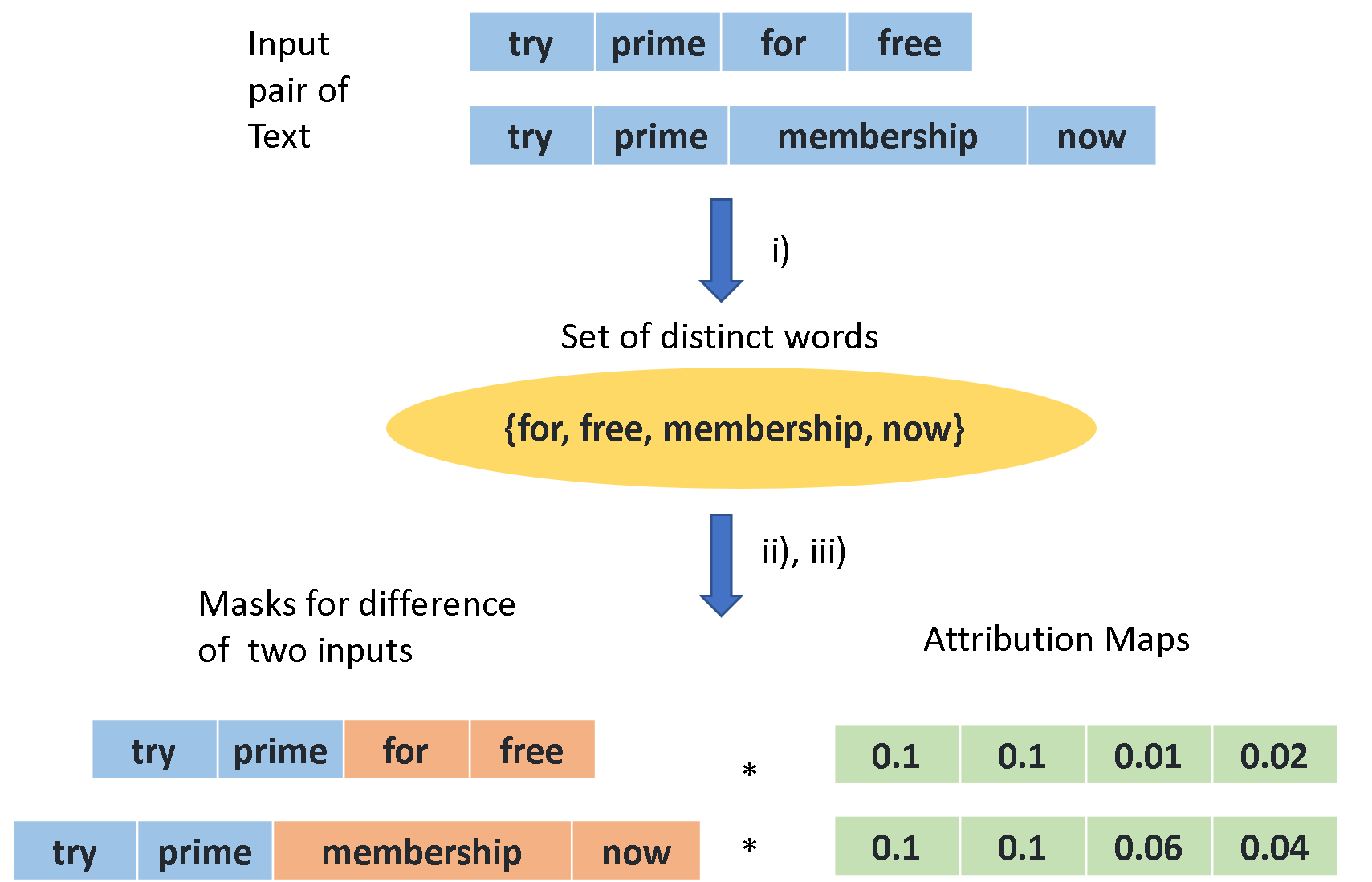}
    \caption{\small Visual explanation of text evaluation Algorithm~\ref{alg:eval_text}.}
    \label{fig:text_eval_exp}
\end{figure}

Specifically, we define the difference between two contents as the {\em difference set} $S=\{x|(x \in X \wedge x\notin X')\cup (x \notin X \wedge x\in X') \}$, the predicted attribution difference as $\Delta c(x) = \sum_{x \in (X' \cap S)}c(x) - \sum_{x \in (X \cap S)}c(x) $, and the actual success rate improvement as $\Delta Y = Y' - Y$. 
We postulate that a linear relationship exists between $(\Delta c(x), \Delta Y)$, which implies that marketers can improve the content's attractiveness by making modifications based on the insights.
Hence, we propose a method that first finds $\Delta c(x)$ by computing the difference set of instances $S$, and then evaluates the Pearson correlation coefficients $\rho$ across all possible control and treatment pairs within the same content domain in the dataset~\cite{benesty2009pearson}. The Pearson Correlation Coefficient used in our evaluation is defined as:
\begin{equation}\label{eq: corr}
        \rho = \frac{\text{cov}(\Delta c(x), \Delta Y)}{\sigma_{\Delta c(x)} \sigma_{\Delta Y}},
\end{equation}
where $\text{cov}(\Delta c(x), \Delta Y)$ is the covariance between $\Delta c(x)$ and $\Delta Y$, $\sigma_{\Delta c(x)}$ is the standard deviation of $\Delta c(x)$, and $\sigma_{\Delta Y}$ the standard deviation of $\Delta Y$. In our implementation, since we do not have direct access to $\Delta c(x)$ and $\Delta Y$, we compute the Pearson Correlation Coefficient $\rho$ of their surrogates. We denote the surrogates of $\Delta c(x)$ and $\Delta Y$ as $\mathbf{d}_C$ and $\mathbf{d}_Y$, respectively.
\begin{figure}[!t]
    \centering
    \includegraphics[width=0.39\textwidth]{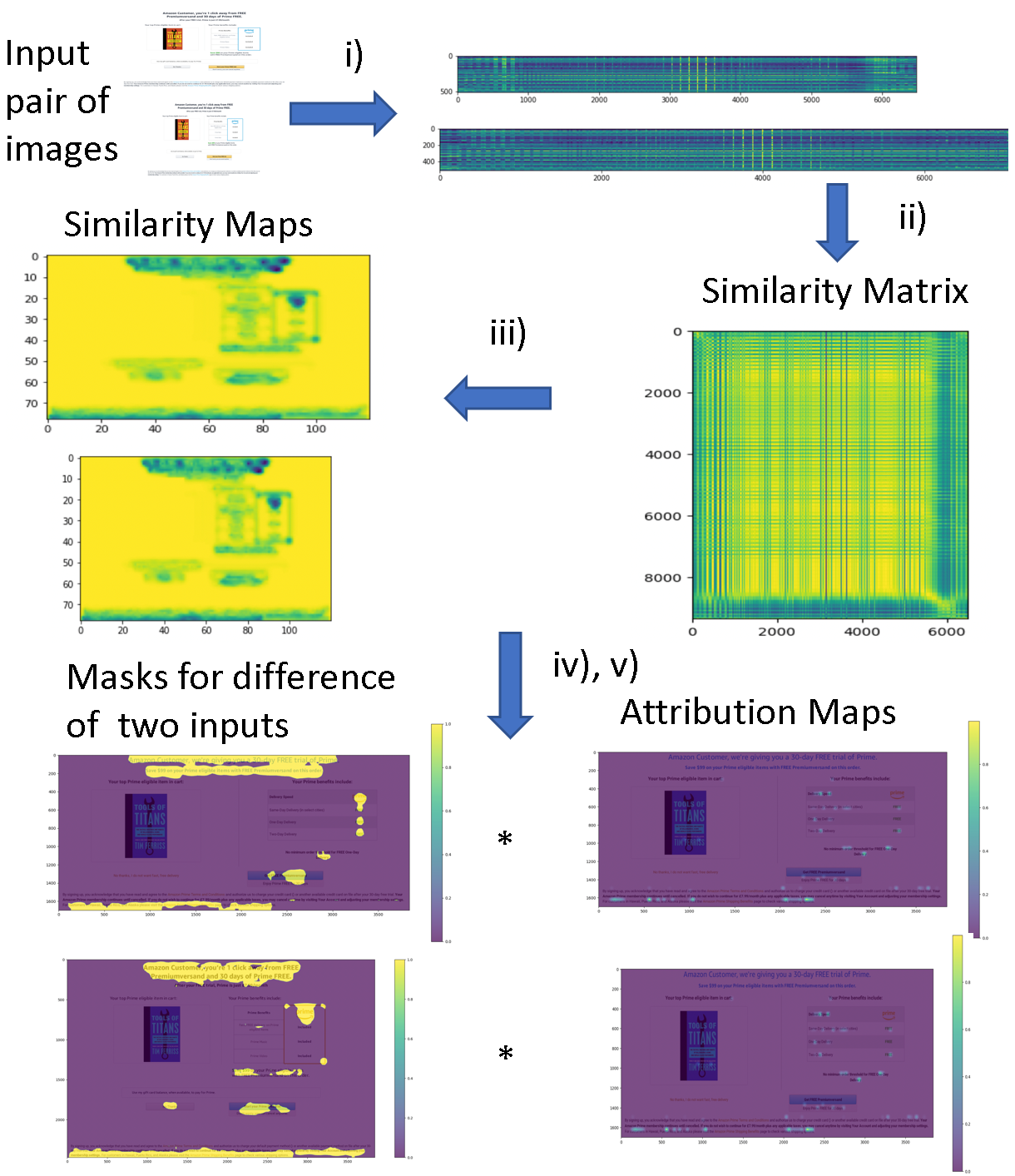}
    \caption{\small Visual explanation of image evaluation Algorithm~\ref{alg:eval_image}.}
    \label{fig:image_eval_exp}
\end{figure}

\paragraph{\textbf{Evaluation Algorithm Description.}}
We propose a generic three-step approach to evaluate insights in Algorithm~\ref{alg:eval_general}.
We also provide the pseudo-code of our implementation to evaluate insights for real-world data structures including images and text in the Appendix (Algorithms~\ref{alg:eval_text} and \ref{alg:eval_image}).
The general idea of Algorithm~\ref{alg:eval_general} is:
\begin{enumerate}
\item First, we find a difference set $S$ of two input samples, which represents the distinct elements that only appear in one of them.
Based on the difference set $S$, we generate two masks for two input samples.
Note that the masks retain the elements in the set $S$.
\item Then, we use the masks to obtain the inner product of the corresponding attribution maps for two samples, and compute the difference $d_C$ of the inner product results.
We call it the predicted summed attributions of modifications.
$d_C$ represents the total attributions when sample one is modified to sample two, or {\em vice versa}.
We also get $d_y$ by computing the difference in ground truth success rates of two samples.
\item Finally, we quantify the linear relationship between $d_C$ and $d_y$ by computing a correlation coefficient $\rho$ on the whole test dataset. 
\end{enumerate}

The resulting correlation coefficient represents how well, when the input sample is modified based on the attribution insight, can it contribute to the change of its ground truth success rate.
This metric is very useful in our digital marketing setting, where our goal is to provide deep insights generated by attributions to help digital marketers amend their content to improve its attractiveness.
To ensure the algorithms operates accurately, each pair of samples used to compute $d_c$ and $d_y$ must be a pair of control and treatment instances from the same content experiment. 

\paragraph{\textbf{Insight Examples.}}
Figures~\ref{fig:text_eval_exp} and~\ref{fig:image_eval_exp} provide visual explanations of our insights evaluation algorithm in text and image settings, respectively.
Figure~\ref{fig:text_eval_exp}, illustrating text evaluation, can be understood as follows.
In step $i$), we extract a set of words that only appear either in the control sentence or the treatment sentence.
In step $ii$), we use this set to create a mask for both sentences, where each element in the mask is $1$ (orange color in the figure) if the word in that position belongs to the set $S,$ or $0$ (blue color in the figure) if the word in that position does not appear in set $S$.
In step $iii$), we take the inner product of the masks with the attribution maps to produce $d_C$.

Figure~\ref{fig:image_eval_exp}, illustrating image evaluation, can be understood as follows.
Step $i$) creates the feature maps of the control and treatment images. After properly reshaping the feature maps, step $ii$) computes the similarity between every pair of control-treatment feature vectors, creating a similarity matrix.
Step $iii$) takes the matrix with maximum control-treatment similarity score for each location.
Step $iv$) thresholds the similarity maps to create masks for differences between control and treatment, and reshapes them to the same size as their corresponding attribution maps.
Step $v$) takes the inner product of the masks with the attribution maps to produce $d_C$.

\section{Experiments}\label{sec: results}
%
We evaluate our algorithm on the dataset described in Section~\ref{sec: data}. If a modality is missing, we use a zero vector to substitute the missing embedding.
We split the dataset into training, validation and test sets with a ratio of 50:10:40.
To evaluate the performance of our model on both existing and unseen content domains, we divide the test set into in-domain and out-of-domain subsets.
In-domain only contains content domains present in the training set, and out-of-domain includes market domains absent from it. 

\begin{figure*}[!htb]
\includegraphics[width=0.9\textwidth]{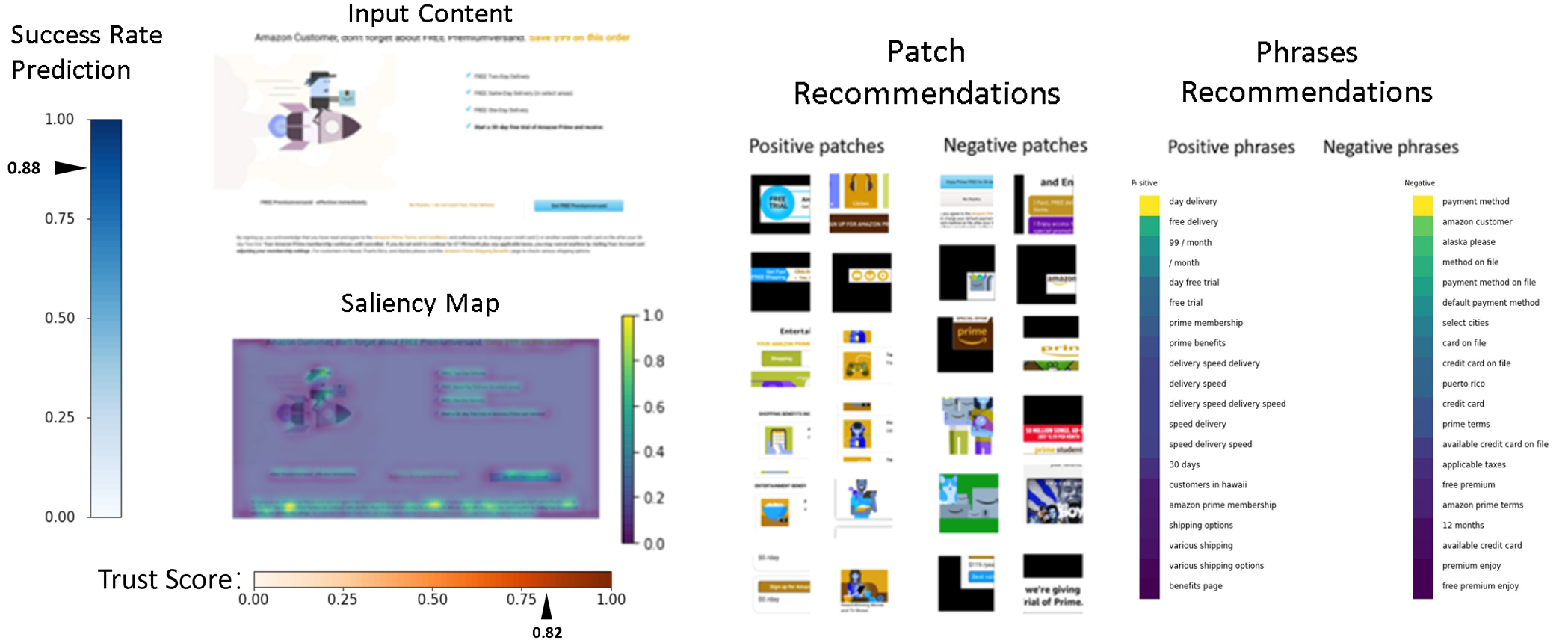}
\caption{Exemplar dashboard of our interactive system to refine existing content or design new content.
} 
\label{fig:dashboard}
\end{figure*}

\subsection{Model Specifications}
Here, we describe the training hyper-parameters used in our experiments. We use ResNet-18 as the image model.
The image model is trained via an Adam optimizer with a batch size of $32$, $\beta_1$ of $0.9$, $\beta_2$ of $0.999$ and a learning rate of $0.001$ for $50$ epochs.
During pre-training, we randomly crop a $512 \times 512$ patch from the image as input in order to limit GPU memory usage.
When we train the full multimodal model and infer new samples, we feed the whole image as the input.
Due to the high memory consumption of processing full-size screenshots (size ranging from $1000 \times 1000$ to $6000 \times 6000$), we freeze the weights of image models at this stage, avoiding GPU out-of-memory issues.
The text model uses BERT as its backbone, and is trained by an Adam optimizer with a batch size of $8$, $\beta_1$ of $0.9$, $\beta_2$ of $0.999$ and a learning rate of $0.001$ for $50$ epochs.

The domain, feature and regression modules are four-layer fully-connected MLP neural networks, with each layer followed by batch normalization and ELU activations.
ELU activation is often used in regression tasks~\cite{jesson2020identifying}.
The domain module and the feature module are trained via an Adam optimizer with a batch size of $512$, $\beta_1$ of $0.9$, $\beta_2$ of $0.999$ and a learning rate of $0.0005$ for $50$ epochs. 
The regression network is trained when we optimize the complete multimodal model.
After separately pretraining the image, text, domain and feature models, we train the whole multimodal model with a batch size of $32$, $\beta_1$ of $0.9$, $\beta_2$ of $0.999$ and a learning rate of $0.001$ for $50$ epochs.
See Appendix~\ref{sec: architectures} for neural network architecture details.

\subsection{Interactive Dashboard} \label{sec: dashboard}
For ease of use, we propose an interactive dashboard for digital marketers to visually work their content (see Figure~\ref{fig:dashboard}).
Our dashboard aims to provide similar functionality to~\cite{sinha2020designing}, but our framework turns out to be more powerful and comprehensive.
Specifically, our dashboard has merits that facilitate digital marketing design.
\begin{enumerate}
\item In~\cite{sinha2020designing}, the insights are restricted to the handcrafted features, which suffer from inefficient scalability and intuitiveness.
For example, it is unclear what to do with the insights on a specific attribute like ``lighting''.
Does it direct the marketer to increase the lighting of the whole page or a specific section?
In contrast, our insights are directly overlaid on the original content as a saliency map, as in Figure~\ref{fig:dashboard}.
\item Our system provides recommendations of design elements based on historical data. When the marketers design a website content in a specific content domain, our dashboard shows the patches and words/phrases with the highest average interpretation scores on historical data with the same content domain $D$.
\item Our system easily extends to new marketing content and novel features, thus our insights are not constrained to existing marketing content features.
\item Our success rate prediction is more accurate.
We compare several commonly used machine learning models with our proposed deep multi-modal method.
The results in Table~\ref{table: scoring} show our model outperforming the rest. We also present an insights ``Trust Score'', which is based on the insights evaluation results in Table~\ref{table: insights}.
\end{enumerate}

\begin{table*}[t]
\caption{Success rate prediction results for different models and modality combinations. We show the percentage decrease of RMSE and MAE for each model compared to GLM. } 
\centering 
\begin{tabular}{c|c| cc|cc} 
  & & \multicolumn{2}{c|}{In-domain test set} & \multicolumn{2}{c}{Out-of-domain test set} \\
  Model & Modality & RMSE change \FancyDownArrow & MAE change \FancyDownArrow  & RMSE change \FancyDownArrow & MAE change \FancyDownArrow \\
  \hline
  GLM  & Categorical Features & 0 \% & $0 \%$ & $0 \%$ & $0 \%$ \\
  MLP  & Categorical Features & \gr{-42\%} & \gr{-31\%} & \gr{-44\%} & \gr{-35\%}\\
  MLP  & Domain & \gr{-54\%}& \gr{-33\%}& \gr{-50\%} & \gr{-29\%}\\
  XGBoost & Categorical Features & \gr{-38\%} & \gr{-9\%} & \gr{-41\%} & \gr{-24\%}\\
  ResNet-18 & images & \gr{-25\%} & \color{red}{19\%} & \gr{-38\%} & \gr{-12\%} \\
  BERT & Text & \gr{-59\%} & \gr{-64\%} & \gr{-59\%} & \gr{-66\%} \\ 
  Multi-modal Neural Network & All modalities & \textbf{\bfgr{-68\%}} & \bfgr{-65\%} & \bfgr{-66\%} & \bfgr{-75\%}\\
\end{tabular}
\label{table: scoring} 
\end{table*}

\begin{table}[htb!]
\centering

\centering 
\caption{Results of insights evaluation. The performance metric is the percentage increase of Pearson Correlation Coefficient defined in Equation~\ref{eq: corr} for each attribution method compared to GradCam.} 
\begin{tabular}{cccccc} 

\multicolumn{1}{c|}{ } & \multirow{2}{*}{GradCam} & Integrated & Kernel & Feature &  \multirow{2}{*}{PCA}\\
\multicolumn{1}{c|}{ } & & Gradient & SHAP & Ablation & \\

\hline 
\multicolumn{1}{c|}{$\Delta \rho_{\text{text}}$ \FancyUpArrow}& 0\% & \gr{+288\%} & \gr{+109\%} & \color{red}{-14\%}& \bfgr{+493\%} \\
\multicolumn{1}{c|}{$\Delta \rho_{\text{image}}$ \FancyUpArrow}& 0\% & \gr{+55\%} & \color{red}{-18\%} & \gr{+0.5\%}& \bfgr{+145\%}
\end{tabular}
\label{table: insights}
\end{table}

\subsection{Evaluation}
\label{sec:evaluation}
We evaluate our method using multiple methods.
We quantitatively evaluate the success rate prediction, comparing our proposed multi-modal neural network to competing methods. 
We then report the predicted causal effect of applying our insights to improve content using our proposed correlation metric.
Qualitatively, we exhibit some feedback of using our interactive dashboard in Section~\ref{sec: dashboard} to design marketing contents from real-world digital marketers.

\begin{algorithm}[!htb]
\caption{Pairwise Accuracy.}\label{alg:pair_acc}
\KwData{Predictions $\mathbf{\widehat{y}}=[\widehat{y}_1,\ldots,\widehat{y}_n  ]$, truth $\mathbf{y}=[y_1,\ldots,y_n]$} 
\KwResult{Pairwise accuracy score $s$.} 
Initialize $\text{count}=0$ and $\text{hit}= 0$ .\\
\textbf{for every distinct pair $(\widehat{y}_i, \widehat{y}_j)$ and $(y_i, y_j)$ in dataset} {do}\;
\qquad \textbf{if} $\text{sign}(\widehat{y}_i-\widehat{y}_j) = \text{sign}(y_i-y_j)$ \;
\qquad \qquad $\text{hit} = \text{hit} + 1$\;
\qquad \textbf{end} \;
\qquad $\text{count} = \text{count} + 1$ \;
 \textbf{end} \;
  $s \leftarrow \text{hit}/\text{count}$\;
\textbf{Output} $s$   \\
\end{algorithm}

\paragraph{\textbf{Success rate and pairwise prediction}}
Table~\ref{table: scoring} shows the success rate prediction results of different scoring models on our dataset. Here, we report the change in Root Mean Square Error (RMSE) and Mean Absolute Error (MAE), both commonly used to evaluate the performance of regression models.
We test the Generalized Linear Model used in~\cite{sinha2020designing}; MLP and XGBoost using only categorical features extracted from text and images, which are typically used in industrial applications; and deep learning models that take a single modality as input (BERT with text as input and ResNet-18 with image as input). The results show that our multi-modal neural network outperforms all competing methods.

During content experimentation, marketers often target a content to iterate on and improve.
Then they conduct an experiment to compare the control content with its modified counterpart(s) ({\em i.e.}\ treatments).
    We use the pairwise ranking accuracy~\cite{ackerman2011evaluating} between the control and each treatment counterpart to evaluate the performance of our models. Algorithm~\ref{alg:pair_acc} details how pairwise accuracy is computed.
    The \textit{Pairwise Accuracy} of our proposed model achieves a relative percentage increase of \gr{+38\%} on an out-of-domain test set when compared to GLM.
This result shows that our neural network model is much more accurate for marketers in real-world use-cases.

\paragraph{\textbf{Evaluating Insights}}
In Table~\ref{table: insights}, we evaluate the insights generated from the trained deep neural networks using our proposed evaluation scheme (see Section~\ref{sec: insights_eval}).
In our dataset, we have multiple treatments related to a given control, requiring $O(n^2)$ time to compute $d_C$ and $d_y$.
We avoid such computational complexity by only comparing control with the best performing treatment in the same content domain.

For text data, Integrated Gradients performs the best among GradCam, Kernel SHAP and Feature Ablation. After we integrate these interpretation methods together by PCA, our method yields the highest correlation score. PCA returns a relative percentage increase of  \gr{+493\%}, which is a very high correlation score. The result of PCA indicates a strong correlation between insights and success rate improvement, suggesting that the insights are trustworthy.
Marketers should consider modifying their templates based on the insight attribution scores, and the insights-guided modification are highly likely to improve the success rate.

For image data, all above-mentioned attribution methods are too slow or intractable, as the size of image inputs is much larger than text inputs, taking too much time to compute attributions for all input pixels. To run the experiment in a reasonable time, we discard the very large images that has more than $5e+06$ pixels and evaluate the insights of the remaining image data. From the results, we still see the pattern that Integrated Gradients and PCA methods outperform GradCam, with Integrated Gradient and PCA posting a correlation increase of \gr{+55\%} and \gr{+145\%} respectively. We hypothesize that Integrated Gradient is more accurate since it computes attributions on the original image, as opposed to computing it on the intermediate activations, as with GradCam. PCA integrates different aspects of attributions and captures the shared variance of attribution maps from GradCam, Integrated Gradients, KernelShap and Feature Ablations, leading to the best results.

\paragraph{\textbf{User Experience}}

To further demonstrate the claims in our paper, we launched a demo of the functionality discussed in Section~\ref{sec: dashboard}. The demo dashboard looks similar to Figure~\ref{fig:dashboard}, including a saliency map that highlights which parts of the input content to keep or redesign, and recommended phrase and patch insights to act on. 
The demo has been shown to tens of professional digital marketers, with mostly positive feedback. 

Here is a positive feedback example, which highlights the usefulness of our framework in facilitating marketing content design:
``The new demo visualization insights helped make analyzing our current templates faster - allowing marketers to spend more time identifying opportunities, create hypotheses, and test new experiences based on the results. In addition, the positive and negative contribution saliency maps enable marketers to select what areas of a template may have the highest impact during experimentation. We are looking forward to continue working to develop this tool and use it to help with successful experiments!''

In the above user's feedback, the marketer praises our positive and negative contribution saliency maps.
In our implementation, the positive (negative) contribution map is based on the absolute value of the positive (negative) part of the attribution map.
This visualization makes it easy for users to identify the positive and negative impact of the input content. 

\section{Related Works}
In this section, we briefly discuss the existing works related to our topic, including modeling digital marketing contents, related deep learning approaches for text and image recommendations, and evaluation metrics for attribution methods. Note that none of these related works fully scales and solves our problem, especially as we define distinct tasks in Sections~\ref{sec: insights} and~\ref{sec: insights_eval}. 

\textbf{Modeling Digital Marketing Contents.} The problem of modeling digital marketing content has triggered substantial research efforts \cite{fong2019image, sinha2020designing, wang2022data, zhou2020product} over the past decade. \citet{fong2019image} developed a machine learning pipeline to classify advertising images based on their quality. \citet{wang2022data} combines deep neural network and evolutionary algorithm to predict optimal personalized marketing strategy for better incomes. \citet{zhou2020product} proposes a recommendation algorithm based on recurrent neural network and distributed expression for recommending new products to consumers based on their browsing history. The above-mentioned works are out of our scope, as we focus on extracting insights from deep models to help digital marketers improve their content.
The closest research to ours is \citet{sinha2020designing}, which aims to improve the attractiveness of contents by providing AI insights. Nevertheless, they use a much simpler machine learning pipeline than ours, such that our framework has better prediction accuracy and more interpretable insights. Besides, they don't propose an insights evaluation metric, making us the first researchers to quantitatively examine the effectiveness of generated marketing AI insights.

\textbf{Related Deep Learning Approaches.}
Among the reproducible deep learning approaches, our recommendation is quite similar to prototype learning.
Prototype learning is
a form of case-based reasoning \cite{kolodner1992introduction, schmidt2001cased}, which draws conclusions
for new inputs by comparing them with a few exemplar cases (i.e prototypes) in the problem domain \cite{chen2019looks, li2018deep}. It is a natural practice in
our day-to-day problem-solving process. For example, physicians
perform diagnosis and make prescriptions based on their experience with past patients~\cite{dutra2011upgrades, geng2018temporal}, and mechanics predict potential malfunctions by recalling vehicles exhibiting similar symptoms~\cite{Geng2023SAmQ}. Prototype
learning imitates human problem-solving processes for better
interpretability. 
Recently the concept has been incorporated in convolutional neural networks to build interpretable image classifiers
\cite{chen2019looks, li2018deep}. 
Our framework is somehow similar to ProtoPNet~\cite{chen2019looks}, in the sense that we both first highlight the salient areas and then make recommendations. ProtoPNet outputs the recommendations that explain the image classification results, while our recommendations focus on improving the attractiveness scores of the current input. 
So far, prototype learning is not yet explored for modeling and improving digital marketing contents.  Our method can be seen as learning prototypes that increase the regression scores, a new problem that we leave for future work. 

\textbf{Evaluating Attribution Methods.}
Recent research have proposed several metrics to evaluate attribution methods, which can be divided into two categories: Sanity Checks and Localization-Based Metrics. Sanity Checks \cite{adebayo2018sanity, rao2022towards, agarwal2022openxai} are designed to examine the basic properties of attribution methods according to faithfulness, stability and
fairness. 
 We aim at quantifying the effectiveness of attribution methods in real-world applications though. Hence our evaluation scheme examines the relationship between insights-guided modifications and the ensuing change in the actual success rate. 
 Localization-Based Metrics measure how well attributions coincide with object
bounding boxes or image grid cells that contains the key objects explaining the classification results~\cite{bohle2021convolutional, cao2015look, fong2017interpretable}. In our scenario, we do not have the ground-truth bounding boxes, and our attribution methods explain the regression model. Thus localization-based metrics do not apply.

\section{Conclusion}
This paper constitutes the first attempt to use deep learning to facilitate the digital marketing design process.
Our multimodal neural network outperforms competing methods in predicting success rates, and leverages neural attribution methods to provide insights that guide digital marketers to improve their existing design. 
Our approach is modular and generalizable, and individual neural components can be easily replaced as the state-of-the-art evolves. 
This work underscores the need to explore causal-aware models for modeling content experimentation, which we leave as future work. Additionally, our system's output insights can be further improved by high-capacity language and vision models such as ChatGPT \cite{OpenAI2023GPT4TR} and SAM \cite{kirillov2023segment}. These models can provide clearer and more actionable instructions for human experts. Besides, our proposed insights evaluation methods may have broader impact on other real-world use-cases such as in healthcare, finance, bank sales etc. For example, quantifying the estimated contributions of biological risk factors on healthcare costs~\cite{lee2022quantifying} or examining the effectiveness of a predicted business decision from an AI agent on the company's income/loss~\cite{alaluf2022reinforcement}.


\section*{ACKNOWLEDGMENTS}
The authors would like to thank the anonymous reviewers for their insightful comments.
This research was supported by ONR N00014-18-1-2871-P00002-3.
Special thanks to Amazon AWS for generously providing the computing resources necessary for this work.


\clearpage
\newpage
\bibliographystyle{ACM-Reference-Format}
\balance
\bibliography{ref}

\newpage
\appendix
\section*{Appendix}

\section{Detailed Insight Evaluation Algorithms}

\begin{algorithm}[!htb]
\caption{Evaluate attribution results of a text model.}\label{alg:eval_text}
\KwData{Input dataset $\{(x_1, y_1), (x_2, y_2), ..., (x_N, y_N)\}$ where $x_i \in \mathbb{R}^n$ is the input and $y_i \in \mathbb{R}$ is the label, 
and their corresponding attribution maps $\{C_1, C_2, ..., C_n\}$ where $C_i\in \mathbb{R}^n$.}
\KwResult{Correlation coefficient $\rho$.}
  Initialize $k\leftarrow 0$, $\mathbf{d_C} \leftarrow \vec{0}^{\frac{n*(n-1)}{2}}$ and  $\mathbf{d_y}\leftarrow \vec{0}^{\frac{n*(n-1)}{2}}$. \\
 1) Find the difference: \\
\textbf{For every pair of control and treatment $\{x_i,x_j\}$ in dataset} \textbf{do}:     \\
      \par  \textbf{i.} Compute the distinct elements set $S_{i,j}$,  such that the attributes in $S_{i,j}$ can be only found in $x_i$ or $x_j$. \\
        \par \qquad   $S_i \leftarrow \text{set}({x_i})$, \, $S_j \leftarrow \text{set}({x_j})$ ;        
 \par \qquad   $S_{i, j} \leftarrow S_i  \cup S_j - S_i \cap S_j$ ; 

      \par \textbf{ii.} Compute $P_i$ and $P_j$, indicator vectors where $P_i := \{p_i^s\}_{s=1,2,...n}$ such that $p_i^s = 1$ if $x_i^s \in S_{i,j}$ and $p_i^s = 0$ if $x_i^s \notin S_{i,j}$, and $P_j := \{p_j^s\}_{s=1,2,...n}$ such that $p_j^s = 1$ if $x_j^s \in S_{i,j}$ and $p_j^s = 0$ f $x_j^s \notin S_{i,j}$.

      \par \textbf{iii.} Compute $d_C = \text{sign}(y_i - y_j)(P_i^TC_i - P_j^TCj)$ as the sum of predicted attributions difference, and $d_y = |y_i - y_j|$ as the actual success rate improvements. \\
      Update:
        \par  \qquad $\mathbf{d_C}[k] \leftarrow d_C$, \, $\mathbf{d_y}[k] \leftarrow d_y$ \;
         \par  \qquad $k \leftarrow k + 1$ \;
      \textbf{end} \;
 2) Compute Pearson Correlation $\rho$ between $\mathbf{d_C}$ and $\mathbf{d_Y}$ . \\
\textbf{Output $\rho$}.
\end{algorithm}
\begin{algorithm}[!htb]
\caption{Evaluate attribution results of an image model.}\label{alg:eval_image}
\KwData{Input dataset $\{(x_1, y_1), (x_2, y_2), ..., (x_N, y_N)\}$ where $x_i \in \mathbb{R}^{m\times n \times Z}$ is an RGB input image and $y_i \in \mathbb{R}$ is the label of the image, their corresponding attribution maps $\{C_1, C_2, ..., C_n\}$ where $C_i\in \mathbb{R}^{m\times n}$, and the vision model $\Phi(\cdot)$ that can extract features of input images.} 
\KwResult{Correlation coefficient $\rho$.}
  Initialize $k\leftarrow 0$, $\mathbf{d_C} \leftarrow \vec{0}^{\frac{n*(n-1)}{2}}$ and  $\mathbf{d_y}\leftarrow \vec{0}^{\frac{n*(n-1)}{2}}$. \\
 1) Find the difference: \\
\textbf{For every pair of control and treatment $\{x_i,x_j\}$ in dataset} \textbf{do}:     \\
      \textbf{i.} Compute the feature maps of inputs: \\
      \qquad $\mathbf{A}_i \leftarrow \Phi(x_i)$,  $\mathbf{A}_j\leftarrow \Phi(x_j)$ \;
    reshape $\mathbf{A}_i$ and $\mathbf{A}_j$: \\ 
     \qquad $\mathbf{A}_i  \leftarrow \text{reshape}(\mathbf{A}_i, (m'  n',  Z')) $,\\
     \qquad $\mathbf{A}_j  \leftarrow \text{reshape}(\mathbf{A}_i, ( Z', m' n')) $\;
          \textbf{ii.} Compute the Cosine Similarity matrix between every feature vector in $\mathbf{A}_i $ and every feature vector in $\mathbf{A}_j $: \\
  \qquad    $\mathbf{S}_{i,j}[k,l] \leftarrow \mathlarger{\frac{<\mathbf{A}_i[k, :], \mathbf{A}_j[:, l]>}{|\mathbf{A}_i[k, :]||\mathbf{A}_j[:, l]|}},$ \\
  \qquad $\forall k=0,1,2,..., m'n' ,\quad \forall l=0,1,2...,m'n' $\;
      \textbf{iii.} Take the maximum similarity scores for each location in $x_i$ and $x_j$: \\
      \qquad  $\mathbf{d}_{x_i}[i] \leftarrow \max\limits_{l}\mathbf{S}[i, l],\quad \forall i=0,1,2...,m'n' $ \;
         \qquad $\mathbf{d}_{x_j}[j] \leftarrow \max\limits_{k}\mathbf{S}[k, j], \quad \forall j=0,1,2...,m'n'$
         
          \textbf{iv.} Threshold $\mathbf{d}_{x_i}$, $\mathbf{d}_{x_j}$ and resize them to the same dimension as $C_i$, $C_j$: \\
          \qquad $P_i \leftarrow \text{threshold}(\mathbf{d}_{x_i})$, \, $P_j \leftarrow \text{threshold}(\mathbf{d}_{x_j})$ \;
        Resize $P_i$ and $P_j$: \\
                  \qquad $P_i \leftarrow \text{resize}(P_i, (m, n))$, \, $P_j \leftarrow \text{resize}(P_j, (m, n))$ \;
              \textbf{v.} Compute predicted attribution difference $d_C$ and acutal success rate improvement $d_y$: \\
     \qquad  $d_C = \text{sign}(y_i - y_j)(\sum P_i \odot C_i - \sum P_j  \odot Cj)$ \;
  \qquad  $d_y = |y_i - y_j|$ \;
       
        Update:
        \par \qquad \qquad $\mathbf{d_C}[k] \leftarrow d_C$, \, $\mathbf{d_y}[k] \leftarrow d_y$ \;
         \par \qquad \qquad $k \leftarrow k + 1$ \;
         
      \textbf{end} \;
 2) Compute Pearson Correlation $\rho$ between $\mathbf{d_C}$ and $\mathbf{d_Y}$ . \\

\textbf{Output $\rho$}.
\end{algorithm}

\section{Neural Network Architecture Details}
\label{sec: architectures}
In Table~\ref{table: architectures}, the convolutional layer is denoted as "Conv", followed by the kernel size, stride, padding and number of filters. "fc" means fully-connected layer and the output hidden units is provided after the dash. "ELU", "ReLU" and "Sigmoid" represent the non-linear functions. "GlobalAveragePooling2D"  is the global average pooling operation in the spatial dimension of the tensors, functioning the same as Keras' Global Average Pooling 2D~\cite{2Davgpooling}. "ResBlock" is the standard ResNet block~\cite{he2016deep}. In the brackets, we provide the kernel size, stride, and number of filters.  "TransformerLayer" is the standard layer in a transformer~\cite{vaswani2017attention}. In the brackets, we provide the size of hidden layers and the number of attention heads.

\begin{table}[!htb]
\caption{The architecture of each component in our multimodal neural network.} 
\label{table: architectures}
\centering 
\begin{tabular}{|c| c|} 
 \multicolumn{2}{c}{$\text{ResNet}(\cdot)$}  \\
\hline

Layer & Type \\
\hline
1 & Conv(3, 1, 1)-32 + ReLU()  \\
2 & ResBlock(3, 1, 32)  \\
3 & ResBlock(3, 2, 32)  \\
4 & ResBlock(3,2, 32)  \\
5 & ResBlock(3,2, 32)  \\
6 & BatchNorm()+ReLU() \\
7 & GlobalAveragePooling2D()  \\
\hline

\end{tabular}
\\
\begin{tabular}{|c| c|} 
\multicolumn{2}{c}{ }  \\
 \multicolumn{2}{c}{$\text{BERT}(\cdot)$}  \\
\hline
Layer & Type \\
\hline
1-12 & TransformerLayers(768, 12)  \\
\hline

\end{tabular}
\\

\begin{tabular}{|c| c|} 
\multicolumn{2}{c}{ }  \\
 \multicolumn{2}{c}{$\text{MLP}_1(\cdot)$}  \\
\hline
Layer & Type \\
\hline
1 & fc-512 + BatchNorm + ELU()  \\
2 & fc-1024 + BatchNorm + ELU()  \\
3 & fc-1024 + BatchNorm + ELU()  \\
4 & fc-512 + BatchNorm + ELU()  \\
\hline

\end{tabular}
\\
\begin{tabular}{|c| c|} 
\multicolumn{2}{c}{ }  \\
 \multicolumn{2}{c}{$\text{MLP}_2(\cdot)$}  \\
\hline
Layer & Type \\
\hline
1 & fc-512 + BatchNorm + ELU()  \\
2 & fc-1024 + BatchNorm + ELU()  \\
3 & fc-1024 + BatchNorm + ELU()  \\
4 & fc-512 + BatchNorm + ELU()  \\
\hline

\end{tabular}
\\
\begin{tabular}{|c| c|} 
\multicolumn{2}{c}{ }  \\
 \multicolumn{2}{c}{$\text{MLP}_3(\cdot)$}  \\
\hline
Layer & Type \\
\hline
1 & fc-512 + BatchNorm + ELU()  \\
2 & fc-1024 + BatchNorm + ELU()  \\
3 & fc-1024 + BatchNorm + ELU()  \\
4 & fc-1  + Sigmoid() \\
\hline

\end{tabular}

\end{table}

\section{Additional Results}
In this section, we offer an additional result to support the finding in our main paper. Specifically, Figure~\ref{fig: rmse_mae_per_domain} compares RMSE and MAE of our multimodal neural network and the Generalized Linear Model (GLM) on each domain. Notably, each bar for RMSE and MAE only computed on multimodal data from each respective domain. The objective here is to underscore the consistent error reduction achieved by our multimodal neural network across a variety domains.

\begin{figure}[!b]
    \centering
    \includegraphics[width=0.45\textwidth]{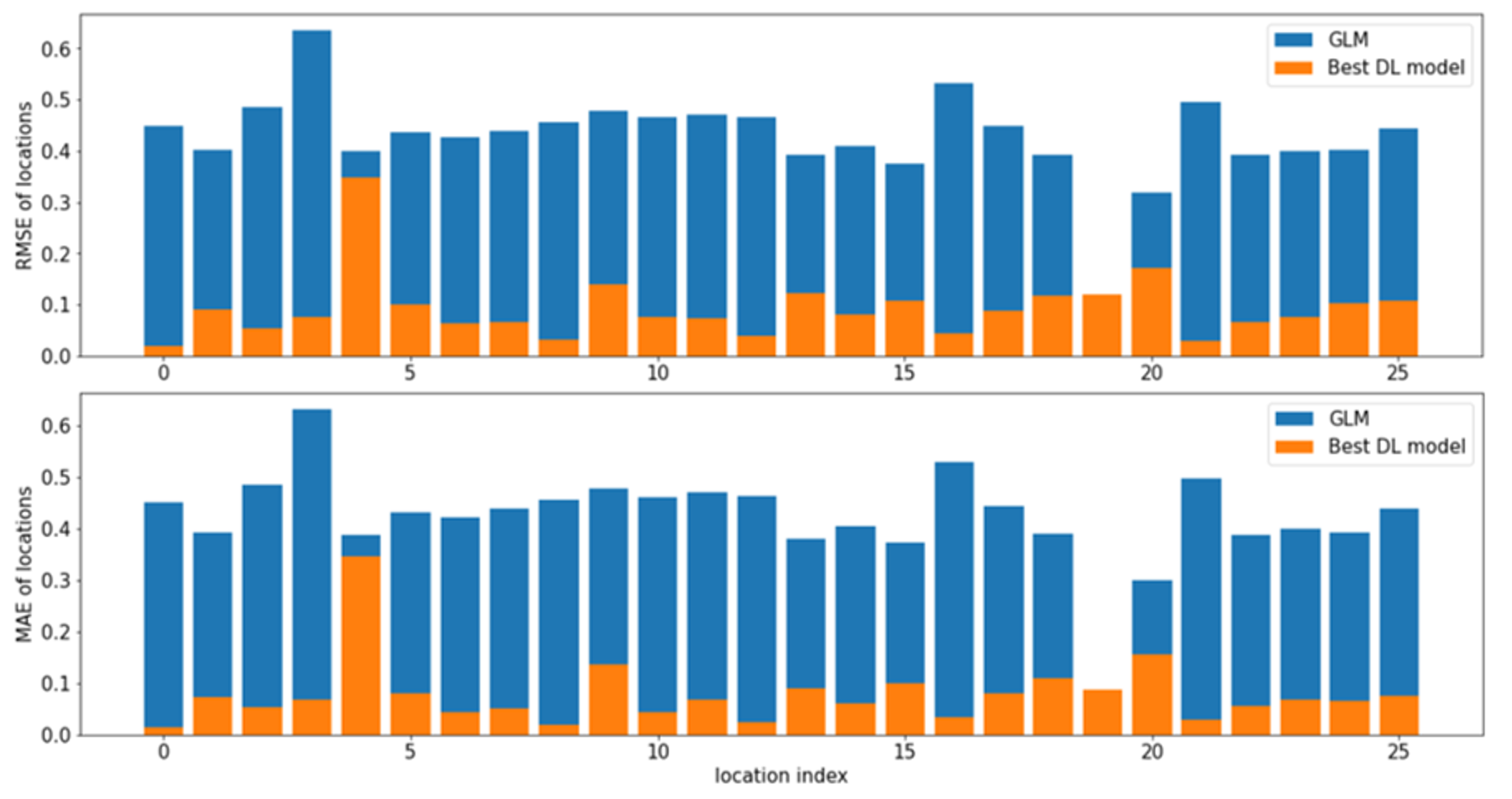}
    \caption{RMSE(top) and MAE(bottom) of GLM(blue) and our multimodal neural network(orange) evaluated on each domain.} 
    \label{fig: rmse_mae_per_domain}
\end{figure}

\end{document}